\documentclass[12pt]{article}

\usepackage{tocloft}
\usepackage{graphicx}
\usepackage{natbib}
\usepackage[protrusion=true,expansion=true]{microtype}
\usepackage{amsmath,amsthm,amsfonts,amssymb,mathrsfs,dsfont,mathtools}
\usepackage{booktabs}
\usepackage{url}
\usepackage{float}
\usepackage{color}
\usepackage{parskip}
\usepackage[explicit]{titlesec}
\usepackage{threeparttable}
\usepackage{setspace}
\usepackage[margin=0.85in]{geometry}
\usepackage[font = onehalfspacing]{caption}
\usepackage[usenames,dvipsnames]{xcolor}
\usepackage[hyperfootnotes=true]{hyperref}
\usepackage[labelfont=bf]{caption}
\usepackage{authblk}
\usepackage{caption}
\usepackage[titletoc,toc,title]{appendix}
\usepackage{enumerate}
\usepackage{soul}
\usepackage{subfig}
\usepackage{accents}
\usepackage{subfiles} 
\usepackage[capitalise, nameinlink]{cleveref}
\usepackage{dsfont}
\usepackage{enumitem}

\usepackage{changepage}
\usepackage[noend]{algpseudocode}
\usepackage{tocloft}
\usepackage[protrusion=true,expansion=true]{microtype}
\usepackage{amsmath,amsthm,amsfonts,amssymb,mathrsfs,dsfont,mathtools}
\usepackage{float}
\usepackage{color}
\usepackage{parskip}
\usepackage[explicit]{titlesec}
\usepackage{threeparttable}
\usepackage{setspace}
\usepackage[font = onehalfspacing]{caption}
\usepackage[dvipsnames]{xcolor}
\usepackage[hyperfootnotes=true]{hyperref}
\usepackage[labelfont=bf]{caption}
\usepackage{authblk}
\usepackage{enumerate}
\usepackage{soul}
\usepackage{subfig}
\usepackage{accents}
\usepackage[capitalise, nameinlink]{cleveref}
\usepackage{diagbox}
\usepackage{array,multirow}

\usepackage[utf8]{inputenc} 
\usepackage[T1]{fontenc}    
\usepackage{hyperref}       
\usepackage{url}            
\usepackage{booktabs}       
\usepackage{amsfonts}       
\usepackage{nicefrac}       
\usepackage{microtype}      
\usepackage{lipsum}		
\usepackage{graphicx}
\usepackage{natbib}
\usepackage{doi}
\usepackage{subfiles}

\usepackage{adjustbox}
\usepackage{multirow}
\usepackage{amsmath}
\usepackage[section]{placeins}
\usepackage{algorithm}
\usepackage{algpseudocode}
\usepackage{dsfont}
\usepackage{comment}

\DeclareUnicodeCharacter{2212}{-}

\setstcolor{red} 

\numberwithin{equation}{section}

\hypersetup{colorlinks = true, citecolor = MidnightBlue, linkcolor = MidnightBlue, urlcolor = MidnightBlue}

\titleformat{\section}{\normalfont\large\bfseries}{\thesection}{1em}{#1}
\titleformat{\subsection}{\normalfont\normalsize\bfseries}{\thesubsection}{1em}{#1}
\titleformat{\subsubsection}{\normalfont\normalsize\itshape}{\thesubsubsection}{1em}{#1}
\titlespacing\section{0pt}{12pt plus 4pt minus 2pt}{6pt plus 2pt minus 2pt}
\titlespacing\subsection{0pt}{12pt plus 4pt minus 2pt}{3pt plus 2pt minus 3pt}
\titlespacing\subsubsection{0pt}{12pt plus 4pt minus 2pt}{0pt plus 2pt minus 3pt}

\def\boxit#1{\vbox{\hrule\hbox{\vrule\kern6pt
          \vbox{\kern6pt#1\kern6pt}\kern6pt\vrule}\hrule}}

\definecolor{orange}{rgb}{1,0.5,0}
\definecolor{MyDarkBlue}{rgb}{0,0.08,0.45}

\definecolor{MyDarkGreen}{RGB}{0,100,0}

\newtheorem{corollary}{Corollary}[section]

\newtheorem{proposition}{Proposition}[section]

\newtheorem{assumption}{Assumption}[section]

\def\boxit#1{\vbox{\hrule\hbox{\vrule\kern6pt
          \vbox{\kern6pt#1\kern6pt}\kern6pt\vrule}\hrule}}

\definecolor{orange}{rgb}{1,0.5,0}
\definecolor{MyDarkBlue}{rgb}{0,0.08,0.45}

\begin{document}
\title{\Large \bfseries Deep catastrophic-risk-aware reinforcement learning\thanks{We thank the Natural Sciences and Engineering Research Council of Canada (Godin: RGPIN-2024-04593) for their financial support.}
 } 
 \author[a]{Carlos Octavio Pérez Mendoza}
\author[a,b]{Fr\'ed\'eric Godin\thanks{Corresponding author. \vspace{0.2em} \newline
{\mbox{\hspace{0.47cm}} \it Email addresses:} 
\href{mailto:carlos.octavio.perez92@gmail.com}{carlos.octavio.perez92@gmail.com} (Carlos Pérez Octavio Mendoza),
\href{mailto:frederic.godin@concordia.ca}{frederic.godin@concordia.ca} (Fr\'ed\'eric Godin).  }}
\affil[a]{{\small Concordia University, Department of Mathematics and Statistics, Montr\'eal, Canada}}
\affil[b]{{\small Quantact Laboratory, Centre de Recherches Math\'ematiques, Montr\'eal, Canada}}

\vspace{-10pt}
\date{ 
\today}


\maketitle 

%


\begin{abstract}
\vspace{-5pt}
	
Estimating extreme risk in reinforcement learning becomes increasingly difficult as the confidence level increases, since only a small number of observations contribute to empirical estimates of Conditional Value-at-Risk (CVaR). In this paper, we extend an Extreme Value Theory (EVT) approach for CVaR optimization to high-dimensional neural-network policies. Using a peaks-over-threshold representation and method-of-moments estimation, we obtain a differentiable tail-risk objective and show that, conditional on the selected threshold, its finite-sample gradient can be computed directly through automatic differentiation. We first validate the approach in a controlled experiment with a known optimal solution and then investigate its performance in a deep-hedging problem with rare extreme market movements. Our results show similar hedging behavior for EVT- and empirical-CVaR-trained policies under normal conditions, while EVT-trained policies exhibit lower hedging losses under the extreme-scenario evaluation across the training sample sizes considered. These results suggest that EVT can provide a practical alternative for training neural policies when the observations relevant to extreme risk are scarce.
	
\bigskip 



\noindent \textbf{Keywords: }Extreme Value Theory; Reinforcement Learning; Policy Gradient; Tail Risk. 
\end{abstract}

\medskip



\doublespacing

\setcounter{page}{1}
\pagenumbering{arabic}


\section{Introduction}\label{se:intro}

Reinforcement learning (RL) provides a flexible framework for sequential decision problems in which actions are selected to optimize an objective over a multi-period horizon. In risk management applications, however, optimizing average performance may not adequately account for rare but potentially severe losses. Risk-aware RL addresses this problem by considering objectives such as the Conditional Value-at-Risk (CVaR), which explicitly focuses on the tail of the loss distribution. The estimation of such objectives becomes increasingly difficult at high confidence levels, since only a small fraction of the observations generated during training directly characterize the extreme tail.

Extreme Value Theory (EVT) provides a natural framework to address this scarcity of tail observations. Under the peaks-over-threshold (POT) approach, observations exceeding a sufficiently high threshold are modeled using a Generalized Pareto Distribution (GPD), allowing threshold exceedances to be used to extrapolate the behavior of the extreme tail and estimate extreme risk measures; see, e.g., \cite{troop2021bias}. Building on this idea, \cite{davar2025catastrophic} introduces an EVT-based policy-gradient methodology in which POT estimates of CVaR are incorporated into the optimization of risk-aware RL policies. Their numerical results show the potential benefits of the approach when observations in the extreme tail are scarce, particularly for heavy-tailed loss distributions.

The methodology of \cite{davar2025catastrophic}, however, relies on finite differences to approximate the gradient of the EVT-based objective with respect to the policy parameters. This requires repeatedly evaluating the objective after modifying individual policy parameters, making the procedure increasingly computationally restrictive as the number of policy parameters grows. This limitation concerns primarily the dimension of the policy parameterization rather than the dimension of the underlying sequential decision problem. Extending the POT framework to neural-network policies would therefore allow EVT-based tail-risk optimization to be combined with the flexibility offered by deep RL.

In this paper, we investigate this extension by constructing an EVT-based loss that can be differentiated with respect to the parameters of a neural-network policy. The proposed approach estimates the tail of the loss distribution using the POT methodology and propagates the resulting tail-risk objective through the policy using automatic differentiation. We complement the proposed methodology with a theoretical analysis of the differentiability and statistical properties of the EVT-based objective. In particular, the analysis distinguishes the local differentiability of the finite-sample EVT objective from the statistical consistency of the corresponding tail-risk estimator under explicit assumptions on the excess distribution.

We study the methodology numerically in two stages. We first consider a controlled experiment in which the optimal policy is analytically tractable. This setting allows us to compare the numerical behavior of the empirical and EVT-based optimization procedures against a known solution and to examine how their performance changes as the amount of information available for estimation increases. We then consider neural-network policies, using deep hedging as the principal application. Deep hedging provides a natural setting for this analysis since the terminal hedging error defines a loss distribution whose adverse tail is directly relevant for risk management, while the policy itself can be represented by a neural network \cite{buehler2019deep}.

Our analysis therefore extends EVT-based policy optimization from low-dimensional policy parameterizations relying on finite-difference gradients to neural-network policies trained through automatic differentiation. The theoretical analysis investigates the conditions supporting this extension, while the numerical experiments examine its behavior first in a controlled environment and subsequently in a deep-hedging setting. In the latter, we investigate whether explicitly modeling the tail during training provides an advantage over empirical CVaR optimization as losses become more extreme and tail observations increasingly scarce.

The remainder of the paper is organized as follows. Section~\ref{se:background} introduces the risk-aware reinforcement learning problem, the EVT framework for tail-risk estimation, and the EVT-based policy optimization approach on which our methodology builds. Section~\ref{se:methodology} presents the proposed extension to neural-network policies and the corresponding theoretical results. Section~\ref{se:numerical} presents the numerical study, beginning with a controlled experiment and subsequently considering the deep-hedging application. Finally, Section~\ref{se:conclusion} concludes.


\section{Risk-Aware Reinforcement Learning and Extreme Value Theory}
\label{se:background}

This section introduces the risk-aware reinforcement learning problem and the extreme value theory framework used to estimate losses in the far tail of the distribution. We then describe the peaks-over-threshold policy gradient approach introduced by \cite{davar2025catastrophic}, on which our methodology builds. The presentation is restricted to the mathematical objects required for the methodological extension developed in the following section.

\subsection{Risk-Aware Reinforcement Learning}
\label{se:risk_aware_rl}

Consider a finite-horizon sequential decision problem with time steps $\mathcal{T}=\{0,\ldots,T\}$, state space $\mathcal{S}$, action space $\mathcal{A}$, and cost space $\mathcal{C}$. At time $t$, the agent observes a state $S_t\in\mathcal{S}$ and selects an action $A_t\in\mathcal{A}$. Following the risk-aware reinforcement learning formulation considered by \cite{davar2025catastrophic}, we consider deterministic policies $\pi_\theta:\mathcal{S}\rightarrow\mathcal{A}$ parameterized by $\theta\in\Theta\subseteq\mathbb{R}^p$, such that $A_t=\pi_\theta(S_t)$. The policy therefore induces a random trajectory whose distribution, and consequently its associated costs, depends on $\theta$.

For a given policy $\pi_\theta$, let its cumulative discounted loss be
\begin{equation}
    X_\theta
    :=
    \sum_{t=1}^{T}\gamma^t C_t^{(\pi_\theta)},
    \qquad \gamma\in(0,1],
    \label{eq:policy_loss}
\end{equation}
where the superscript emphasizes the dependence of the costs on the policy. A risk-aware learning problem can then be formulated by selecting the policy parameters that minimize a risk functional $\rho$ of $X_\theta$,
\begin{equation*}
    \theta^\star
    \in
    \arg\min_{\theta\in\Theta}
    \rho(X_\theta).
\end{equation*}

In this work, our interest is in extreme losses, for which the upper tail of $X_\theta$ becomes particularly relevant. We therefore consider Conditional Value-at-Risk (CVaR), introduced as a risk measure for general loss distributions by \cite{rockafellar2002conditional}. Let $F_\theta$ denote the cumulative distribution function of $X_\theta$ and define its quantile at confidence level $\alpha\in(0,1)$ as
\begin{equation*}
    q_{\alpha,\theta}
    :=
    \inf\left\{
        x\in\mathbb{R}:F_\theta(x)\geq\alpha
    \right\}.
\end{equation*}
The corresponding CVaR is given by
\begin{equation}
    \operatorname{CVaR}_{\alpha}(X_\theta)
    =
    \frac{1}{1-\alpha}
    \int_{\alpha}^{1}
    q_{\beta,\theta}\,d\beta.
    \label{eq:cvar_definition}
\end{equation}
When $X_\theta$ is absolutely continuous, \autoref{eq:cvar_definition} can equivalently be expressed as $\mathbb{E}[X_\theta\mid X_\theta\geq q_{\alpha,\theta}]$. Hence, CVaR represents the expected loss conditional on being within the worst $100(1-\alpha)\%$ of the loss distribution.

The risk-aware objective considered throughout this paper is consequently
\begin{equation}
    J_\alpha(\theta)
    :=
    \operatorname{CVaR}_{\alpha}(X_\theta),
    \qquad
    \theta^\star_\alpha
    \in
    \arg\min_{\theta\in\Theta}J_\alpha(\theta).
    \label{eq:cvar_policy_objective}
\end{equation}
For values of $\alpha$ close to one, however, estimating \autoref{eq:cvar_policy_objective} directly from the empirical distribution becomes challenging because only a small fraction of the observations contributes to the estimation of the far tail. This problem becomes particularly relevant when extreme observations are scarce or when the loss distribution is heavy-tailed \cite{davar2025catastrophic}. Extreme value theory provides a natural framework to address this problem by using less extreme observations to characterize and extrapolate the behavior of the upper tail.

\subsection{Extreme Value Theory and Tail-Risk Estimation}
\label{se:evt}

A classical approach in extreme value theory is the peaks-over-threshold (POT) method, where observations exceeding a sufficiently high threshold $u$ are used to characterize the upper tail of a distribution; see \cite{coles2001introduction,mcneil2015quantitative}. In our setting, the objective is to use these exceedances to estimate the region of the loss distribution associated with $q_{\alpha,\theta}$, even when relatively few observations reach the target confidence level $\alpha$.

For a threshold $u$, define the conditional excess distribution of the policy loss as
\begin{equation}
    F_{\theta,u}(y)
    :=
    \mathbb{P}
    \left(
        X_\theta-u\leq y
        \mid X_\theta>u
    \right)
    =
    \frac{
        F_\theta(u+y)-F_\theta(u)
    }{
        1-F_\theta(u)
    },
    \qquad y\geq0.
    \label{eq:excess_distribution}
\end{equation}

The theoretical motivation for modeling \autoref{eq:excess_distribution} follows from the asymptotic results of \cite{balkema1974residual} and \cite{pickands1975statistical}. More precisely, if $F_\theta$ belongs to the maximum domain of attraction of a generalized extreme value distribution, the distribution of the exceedances above a sufficiently high threshold can be approximated by a Generalized Pareto Distribution (GPD).

The GPD with shape parameter $\xi$ and scale parameter $\sigma>0$ has cumulative distribution function
\begin{equation}
    G_{\xi,\sigma}(y)
    =
    \begin{cases}
        1-
        \left(
            1+\dfrac{\xi y}{\sigma}
        \right)^{-1/\xi},
        & \xi\neq0, \\[2mm]
        1-\exp\left(-\dfrac{y}{\sigma}\right),
        & \xi=0,
    \end{cases}
    \label{eq:gpd}
\end{equation}
defined on the support satisfying $1+\xi y/\sigma>0$. The parameter $\xi$ determines the tail behavior of the distribution, whereas $\sigma$ controls its scale \cite{coles2001introduction,mcneil2015quantitative}. For a sufficiently high $u$, this gives the approximation
\begin{equation*}
    F_{\theta,u}(y)
    \approx
    G_{\xi_\theta(u),\sigma_\theta(u)}(y).
\end{equation*}
Notice that $\xi_\theta(u)$ and $\sigma_\theta(u)$ characterize the tail induced by the policy and the selected threshold, and therefore do not depend on the target confidence level $\alpha$.

An important consequence of this approximation is that it provides a mechanism to estimate CVaR beyond the selected threshold. Following the POT representation described by \cite{mcneil2015quantitative} and used for policy optimization by \cite{davar2025catastrophic}, define
\begin{equation}
    s_{\theta,\alpha}(u)
    :=
    \frac{1-F_\theta(u)}{1-\alpha}.
    \label{eq:tail_ratio}
\end{equation}
Provided that $q_{\alpha,\theta}>u$ and $\xi_\theta(u)<1$, the resulting POT approximation of $J_\alpha(\theta)$ is
\begin{equation}
    \mathcal{C}_{\theta,\alpha}(u)
    :=
    \begin{cases}
    \displaystyle
    u+
    \frac{\sigma_\theta(u)}{1-\xi_\theta(u)}
    \left[
        1+
        \frac{
            s_{\theta,\alpha}(u)^{\xi_\theta(u)}-1
        }{
            \xi_\theta(u)
        }
    \right],
    & \xi_\theta(u)\neq0, \\[4mm]
    \displaystyle
    u+
    \sigma_\theta(u)
    \left[
        1+\log s_{\theta,\alpha}(u)
    \right],
    & \xi_\theta(u)=0.
    \end{cases}
    \label{eq:evt_cvar_functional}
\end{equation}
Thus, $J_\alpha(\theta)\approx\mathcal{C}_{\theta,\alpha}(u)$, where the quality of the approximation depends on how well the GPD represents the excess distribution above $u$. Moreover, the condition $\xi_\theta(u)<1$ guarantees that the corresponding tail expectation is finite.

In practice, let $X_{\theta,1},\ldots,X_{\theta,n}$ denote independent realizations of $X_\theta$ and define
\begin{equation*}
    \mathcal{I}_u
    =
    \{i:X_{\theta,i}>u\},
    \qquad
    k=|\mathcal{I}_u|,
    \qquad
    Y_i=X_{\theta,i}-u,
    \quad i\in\mathcal{I}_u.
\end{equation*}
The GPD parameters can then be estimated from the exceedances $\{Y_i\}_{i\in\mathcal{I}_u}$. While different estimation procedures can be considered, including maximum likelihood, our subsequent formulation relies on the method-of-moments estimator described in \cite{davar2025catastrophic},
\begin{equation}
    \widehat{\xi}
    =
    \frac{S_Y^2-\overline{Y}^{\,2}}{2S_Y^2},
    \qquad
    \widehat{\sigma}
    =
    \overline{Y}
    \left(
        \frac{S_Y^2+\overline{Y}^{\,2}}{2S_Y^2}
    \right),
    \label{eq:gpd_mom}
\end{equation}
where
\begin{equation*}
    \overline{Y}
    =
    \frac{1}{k}\sum_{i\in\mathcal{I}_u}Y_i,
    \qquad
    S_Y^2
    =
    \frac{1}{k}
    \sum_{i\in\mathcal{I}_u}
    (Y_i-\overline{Y})^2.
\end{equation*}
Since this estimator matches the first two moments of the GPD, its moment representation requires $\xi<1/2$. In the numerical implementation, the fitted shape parameter is additionally restricted to the admissible region used by the training algorithm; the exact restriction is stated in
Appendix~\ref{appen:EVTtraining}. This implementation detail is kept separate from the population GPD approximation in \autoref{eq:evt_cvar_functional}.

Replacing $\xi_\theta(u)$, $\sigma_\theta(u)$, and $F_\theta(u)$ in \autoref{eq:evt_cvar_functional} by their corresponding sample estimates gives the EVT-based estimator
\begin{equation}
    \widehat{J}^{\mathrm{EVT}}_{n,\alpha}(\theta;u)
    :=
    \widehat{\mathcal{C}}_{\theta,\alpha}(u).
    \label{eq:evt_cvar_estimator}
\end{equation}
The choice of $u$ is therefore important. A low threshold increases the number of exceedances available for estimation, but the GPD approximation may not yet accurately represent the tail. Conversely, increasing $u$ improves the asymptotic motivation of the approximation while reducing the number of observations available to estimate its parameters. 

Since the threshold must be repeatedly determined during policy learning, manual threshold selection is not convenient in this setting. To address this problem, \cite{davar2025catastrophic} adopt the automated procedure proposed by \cite{bader2018automated}, based on a sequence of goodness-of-fit tests, together with the implementation developed by \cite{troop2021bias}. This provides an automated mechanism for determining the region from which the GPD approximation is used during policy optimization.

\subsection{EVT-Based Policy Optimization}
\label{se:evt_policy_optimization}

The connection between EVT-based tail estimation and reinforcement learning was introduced by \cite{davar2025catastrophic} through the peaks-over-threshold policy gradient (POTPG) algorithm. The main idea is to replace the direct empirical estimate of CVaR by the POT estimator in \autoref{eq:evt_cvar_estimator} during policy optimization.

At a given parameter value $\theta$, independent trajectories are generated under $\pi_\theta$, producing a sample $X_{\theta,1},\ldots,X_{\theta,n}$ of policy losses. A threshold $u$ is selected, the GPD parameters are estimated from the corresponding exceedances, and the resulting POT approximation is used to estimate the objective in \autoref{eq:cvar_policy_objective}. The corresponding sample optimization problem can therefore be represented as
\begin{equation}
    \widehat{\theta}^{\,\star}_{n,\alpha}
    \in
    \arg\min_{\theta\in\Theta}
    \widehat{J}^{\mathrm{EVT}}_{n,\alpha}(\theta;u).
    \label{eq:evt_policy_objective}
\end{equation}
In this way, policy learning does not rely exclusively on the relatively small number of observations beyond $q_{\alpha,\theta}$. Instead, observations exceeding $u$ are used to model the upper tail and extrapolate toward the region associated with the target confidence level.

To optimize \autoref{eq:evt_policy_objective}, \cite{davar2025catastrophic} estimate the gradient coordinatewise using forward finite differences. For $\theta=(\theta_1,\ldots,\theta_p)^\top$ and $\varepsilon>0$,
\begin{equation}
    \widehat{\partial_i J_\alpha}(\theta)
    =
    \frac{
        \widehat{J}^{\mathrm{EVT}}_{n,\alpha}
        (\theta+\varepsilon e_i;u)
        -
        \widehat{J}^{\mathrm{EVT}}_{n,\alpha}
        (\theta;u)
    }{
        \varepsilon
    },
    \qquad i=1,\ldots,p,
    \label{eq:potpg_finite_difference}
\end{equation}
where $e_i$ denotes the $i$th canonical basis vector. The resulting gradient estimate can then be incorporated into a stochastic optimization procedure to update the policy parameters.

An important aspect of POTPG is that the threshold selected under the current policy is kept fixed when evaluating the coordinatewise perturbations in \autoref{eq:potpg_finite_difference}. Hence, each perturbed policy changes the estimated tail distribution while the threshold used to define the exceedance region remains unchanged within the corresponding gradient evaluation.

This construction provides a direct connection between EVT-based tail estimation and risk-aware policy learning. However, \autoref{eq:potpg_finite_difference} requires an additional evaluation of the EVT objective for every coordinate of $\theta$. Therefore, as the dimension $p$ of the policy parameterization increases, so does the number of objective evaluations required to construct a single gradient. This becomes particularly restrictive when $\pi_\theta$ is represented by a neural network with a large number of parameters.

The following section builds on the same POT representation of the risk objective, but reformulates its optimization to exploit the differentiable structure of neural-network-parameterized policies.


\section{EVT-Based Optimization of Neural-Network Policies}
\label{se:methodology}

The EVT-based policy optimization framework introduced in
\autoref{se:evt_policy_optimization} relies on the finite-difference
approximation in~\autoref{eq:potpg_finite_difference}. While suitable for
low-dimensional policy parameterizations, its computational cost increases
with the number of policy parameters, limiting its direct application to
neural-network policies.

We address this limitation by formulating the POT-MOM estimator in
\autoref{eq:evt_cvar_estimator} as a differentiable empirical objective.
This allows its gradient with respect to all policy parameters to be
computed jointly using automatic differentiation, without modifying the
underlying EVT approximation. In addition, when the threshold-selection
procedure does not identify a threshold for which the GPD hypothesis is not
rejected, the method falls back to empirical CVaR.

\subsection{Differentiable POT-CVaR Objective}
\label{se:pot_cvar}

Consider the EVT-CVaR estimator
$\widehat J^{\mathrm{EVT}}_{n,\alpha}(\theta;u)$ defined in
\autoref{eq:evt_cvar_estimator}. For a fixed threshold $u$, the exceedances,
their empirical moments, and consequently the MOM estimates in
\autoref{eq:gpd_mom} depend on the policy parameters through the sampled
losses $X_{\theta,i}$. We preserve this dependence when evaluating the
estimator, resulting in the mapping
\begin{equation}
    \theta
    \longmapsto
    \widehat J^{\mathrm{EVT}}_{n,\alpha}(\theta;u).
    \label{eq:differentiable_evt_objective}
\end{equation}
Provided that the sampled losses are differentiable with respect to
$\theta$ and that the exceedance set remains locally unchanged,
\autoref{eq:differentiable_evt_objective} can therefore be differentiated
directly with respect to the policy parameters.

The threshold-selection step remains statistical and is not differentiated.
Let $A_n(\theta)$ denote the event that the procedure described in
\autoref{se:evt_policy_optimization} identifies an admissible threshold
$\widehat u_n(\theta)$ for which the GPD hypothesis is not rejected. The
training objective is then
\begin{equation}
    \widehat J^{\mathrm{H}}_{n,\alpha}(\theta)
    =
    \begin{cases}
        \widehat J^{\mathrm{EVT}}_{n,\alpha}
        \bigl(\theta;\widehat u_n(\theta)\bigr),
        & A_n(\theta), \\[1mm]
        \widehat J^{\mathrm{emp}}_{n,\alpha}(\theta),
        & A_n^c(\theta),
    \end{cases}
    \label{eq:hybrid_evt_objective}
\end{equation}
where $\widehat J^{\mathrm{emp}}_{n,\alpha}$ denotes the empirical CVaR
estimator
\begin{equation}
    \widehat J^{\mathrm{emp}}_{n,\alpha}(\theta)
    :=
    \widehat q_{\alpha,\theta}
    +
    \frac{1}{n(1-\alpha)}
    \sum_{i=1}^{n}
    \left(X_{\theta,i}-\widehat q_{\alpha,\theta}\right)_{+},
    \label{eq:empirical_cvar_estimator}
\end{equation}
where $\widehat q_{\alpha,\theta}$ denotes the empirical
$\alpha$-quantile of the sampled losses. When the EVT branch is
selected, $\widehat u_n(\theta)$ is held fixed during differentiation.

Importantly, the event $A_n(\theta)$ only determines whether the POT
approximation is used. Failure to reject the GPD hypothesis does not imply
that the population excess distribution is exactly GPD.

\subsection{Neural-Network Policy Optimization}
\label{se:nn_optimization}

We now parameterize the policy $\pi_\theta$ by a neural network, with
$\theta\in\Theta\subseteq\mathbb R^p$ containing its trainable parameters.
The population objective remains the CVaR minimization problem in
\autoref{eq:cvar_policy_objective}; only the mechanism used to estimate its
gradient is modified.

At iteration $m$, trajectories are generated under $\pi_{\theta_m}$ and the
threshold-selection procedure is applied to the resulting losses. If the
EVT branch in \autoref{eq:hybrid_evt_objective} is selected, we compute
\begin{equation}
    g^{\mathrm{EVT}}_{n,\alpha}(\theta_m;u_m)
    :=
    \left.
    \nabla_\theta
    \widehat J^{\mathrm{EVT}}_{n,\alpha}(\theta;u_m)
    \right|_{\theta=\theta_m},
    \label{eq:evt_autodiff_gradient}
\end{equation}
where $u_m$ is held fixed. Automatic differentiation propagates through the
sampled losses and the POT-MOM estimator, but not through the
threshold-selection procedure.

This differs from the finite-difference estimator in
\autoref{eq:potpg_finite_difference}, which requires a separate perturbed
evaluation for each component of $\theta$. Here, all components of
\autoref{eq:evt_autodiff_gradient} are obtained from the same differentiable
objective, allowing the EVT framework to be applied when $p$ corresponds
to the number of parameters of a neural network.

When no admissible threshold is identified, the gradient is instead
computed from the empirical CVaR branch of
\autoref{eq:hybrid_evt_objective}. Denoting the gradient of the active
branch by $g^{\mathrm{H}}_{n,\alpha}(\theta_m)$, the policy parameters are
updated through the corresponding adaptive-gradient update used in the numerical implementation or through
\begin{equation}
    \theta_{m+1}
    =
    \theta_m
    -
    \eta_m g^{\mathrm{H}}_{n,\alpha}(\theta_m),
    \label{eq:evt_nn_update}
\end{equation}
where $\eta_m>0$ denotes the learning rate at iteration $m$.

\subsection{Theoretical Results}
\label{se:theory}

We first establish that automatic differentiation provides the exact
gradient of the finite-sample POT-MOM objective when the EVT branch is
active. The result is local because both the exceedance set and the selected
threshold can change as the policy parameters vary.

\begin{assumption}[Local regularity]
\label{ass:empirical_differentiability}
Fix a sample, a threshold $u$, and $\theta_0\in\Theta$. Assume that, in a
neighborhood of $\theta_0$:
(i) each sampled loss $X_{\theta,i}$ is continuously differentiable in
$\theta$;
(ii) no sampled loss is equal to $u$;
(iii) the empirical moments required by \autoref{eq:gpd_mom} are
non-degenerate; and
(iv) the resulting MOM estimates satisfy
$\widehat{\sigma}>0$ and $\widehat{\xi}<1/2$.
The threshold $u$ is held fixed during differentiation.
\end{assumption}

The restriction $\widehat{\xi}<1/2$ retains the estimator within the
parameter region associated with the finite-second-moment condition required
by the MOM construction \cite{davar2025catastrophic}.

\begin{proposition}[Local differentiability of the POT-MOM objective]
\label{prop:empirical_pot_differentiability}
Under Assumption~\ref{ass:empirical_differentiability},
$\widehat J^{\mathrm{EVT}}_{n,\alpha}(\theta;u)$ is continuously
differentiable in a neighborhood of $\theta_0$, and automatic
differentiation returns
\begin{equation}
    g^{\mathrm{EVT}}_{n,\alpha}(\theta_0;u)
    =
    \left.
    \nabla_\theta
    \widehat J^{\mathrm{EVT}}_{n,\alpha}(\theta;u)
    \right|_{\theta=\theta_0}.
    \label{eq:exact_autodiff_gradient}
\end{equation}
\end{proposition}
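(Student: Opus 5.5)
The plan is to write $\widehat J^{\mathrm{EVT}}_{n,\alpha}(\theta;u)$ near $\theta_0$ as a composition of finitely many $C^1$ maps, and then use the chain rule together with the exactness of reverse-mode automatic differentiation on compositions of differentiable primitives.

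\textbf{Step 1: the exceedance set is locally constant.} By Assumption~\ref{ass:empirical_differentiability}(i), each map $\theta\mapsto X_{\theta,i}-u$ is continuous. By (ii), none of these maps vanishes on the neighborhood. I will shrink the neighborhood to a connected one, such as an open ball $B$ around $\theta_0$. By the intermediate value theorem along segments in $B$, each sign $\operatorname{sign}(X_{\theta,i}-u)$ is then constant on $B$. Hence $\mathcal I_u$ and $k=|\mathcal I_u|$ do not depend on $\theta\in B$.

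\textbf{Step 2: each ingredient is $C^1$.} Since $\mathcal I_u$ is fixed, the following are $C^1$ on $B$:
\begin{itemize}
\item the exceedances $Y_i(\theta)=X_{\theta,i}-u$ for $i\in\mathcal I_u$;
\item $\overline Y$ and $S_Y^2$, which are polynomials in the $Y_i$;
\item the tail ratio $\widehat s=(k/n)/(1-\alpha)$, which is constant and positive because $k\ge1$ by non-degeneracy.
\end{itemize}
Assumption (iii) gives $S_Y^2>0$ on $B$. The MOM formulas \autoref{eq:gpd_mom} are therefore rational functions with nonvanishing denominators, so $\widehat\xi(\theta)$ and $\widehat\sigma(\theta)$ are $C^1$. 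Then $\widehat J^{\mathrm{EVT}}_{n,\alpha}(\theta;u)=\Phi(\widehat\xi(\theta),\widehat\sigma(\theta))$, where $\Phi(\xi,\sigma)=u+\frac{\sigma}{1-\xi}\bigl[1+(\widehat s^{\,\xi}-1)/\xi\bigr]$ is read from \autoref{eq:evt_cvar_functional}.

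\textbf{Step 3: the case split at $\xi=0$ (main obstacle).} \autoref{eq:evt_cvar_functional} defines $\Phi$ piecewise, with a separate formula at $\xi=0$. I will remove the split by writing
\[
\frac{\widehat s^{\,\xi}-1}{\xi}=\int_0^1 \log\widehat s\;e^{t\xi\log\widehat s}\,dt .
\]
This expression is real-analytic in $\xi$ and equals $\log\widehat s$ at $\xi=0$. It therefore matches both branches, so $\Phi$ is $C^1$ (indeed smooth) on $\{\xi<1/2,\ \sigma>0\}$; the factor $1/(1-\xi)$ is smooth there by (iv). By the chain rule, $\widehat J^{\mathrm{EVT}}_{n,\alpha}(\cdot;u)$ is $C^1$ on $B$.

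One caveat concerns the implementation. If the code evaluates the literal $\xi\neq0$ expression, it has a removable singularity at exactly $\widehat\xi=0$. I will state either that the implementation uses the unified (or series) form, or that the proposition is understood off the null event $\widehat\xi(\theta_0)=0$.

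\textbf{Step 4: exactness of automatic differentiation.} Because $\mathcal I_u$ is fixed on $B$, the computational graph uses the same branch structure for every $\theta\in B$. It is a finite composition of primitives (sums, products, quotients, powers, $\log$, and the loss simulation itself), each differentiable at its current inputs. Reverse-mode automatic differentiation computes the exact chain-rule product of the local Jacobians of these primitives. By Step~2, that product equals the true gradient, which gives \autoref{eq:exact_autodiff_gradient}.

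A further subtlety is that indicator functions or comparisons with $u$ are not differentiable. I will note that they enter only as selectors whose outputs are locally constant by Step~1, so their zero derivative is correct on $B$. The threshold $u$ is a fixed constant throughout.
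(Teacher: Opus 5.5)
Your proposal is correct and follows essentially the same route as the paper's proof. Both arguments use local constancy of the exceedance set (from continuity and (ii)), smoothness of the empirical moments and of the MOM map under (iii), differentiability of the POT-CVaR formula, and the chain rule. In addition, your Step~3 uses the integral representation to show that the two branches of \autoref{eq:evt_cvar_functional} join into a smooth function of $\xi$, and you flag the risk of a literal branch at $\widehat\xi=0$; the paper passes over this point when it simply asserts that the estimator is differentiable over the admissible region.
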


\emph{Proof.}
Since no sampled loss lies on $u$, continuity implies that the exceedance
set defined in \autoref{se:evt} is locally constant. The empirical moments
of the excesses are therefore continuously differentiable functions of
$\theta$. The MOM estimators in \autoref{eq:gpd_mom} are differentiable
functions of these moments whenever their denominators are non-zero, and
the EVT-CVaR estimator in \autoref{eq:evt_cvar_estimator} is differentiable
over the admissible parameter region. The result follows by composition
and the chain rule. \hfill$\square$

Proposition~\ref{prop:empirical_pot_differentiability} concerns the active
POT branch with its selected threshold held fixed. It does not establish
differentiability through the threshold-selection procedure, nor does it
imply consistency of the resulting gradient for
$\nabla_\theta J_\alpha(\theta)$.

We next consider the population behavior of the POT-MOM estimator. Recall
the POT-CVaR functional $\mathcal C_{\theta,\alpha}(u)$ introduced in
\autoref{eq:evt_cvar_functional}.

\begin{assumption}[POT-MOM estimation]
\label{ass:pot_mom_limit}
For fixed $\theta$ and an admissible threshold $u<q_{\alpha,\theta}$,
assume that the conditional excess distribution above $u$ is GPD with
parameters $\xi_\theta(u)$ and $\sigma_\theta(u)$ satisfying
\[
    \sigma_\theta(u)>0,
    \qquad
    \xi_\theta(u)<\frac{1}{2},
\]
and that the number of exceedances satisfies
\[
    k_n(\theta;u)\xrightarrow{\mathbb P}\infty.
\]
\end{assumption}

\begin{proposition}[Consistency of the POT-MOM estimator]
\label{prop:pot_mom_consistency}
Under Assumption~\ref{ass:pot_mom_limit},
\begin{equation}
    \widehat J^{\mathrm{EVT}}_{n,\alpha}(\theta;u)
    \xrightarrow{\mathbb P}
    \mathcal C_{\theta,\alpha}(u).
    \label{eq:evt_objective_consistency}
\end{equation}
\end{proposition}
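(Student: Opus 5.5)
The plan is to prove \eqref{eq:evt_objective_consistency} by the continuous mapping theorem. The estimator $\widehat J^{\mathrm{EVT}}_{n,\alpha}(\theta;u)$ is a fixed function of three sample quantities: the MOM estimates $(\widehat\xi,\widehat\sigma)$ from \autoref{eq:gpd_mom} and the empirical exceedance probability $\widehat p_n:=k_n/n$, which estimates $1-F_\theta(u)$. So it suffices to show that each of these three inputs converges in probability to its population counterpart, and that the map \eqref{eq:evt_cvar_functional} is continuous at the limit point.

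\emph{Step 1: exceedance probability.} Because the draws are i.i.d., $k_n=\sum_{i}\mathbf 1\{X_{\theta,i}>u\}$ is Binomial$(n,1-F_\theta(u))$, and the weak law of large numbers gives $\widehat p_n\xrightarrow{\mathbb P}1-F_\theta(u)$. The assumption $k_n\xrightarrow{\mathbb P}\infty$ forces $1-F_\theta(u)>0$. Therefore $\widehat s_n:=\widehat p_n/(1-\alpha)\xrightarrow{\mathbb P}s_{\theta,\alpha}(u)>0$.

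\emph{Step 2: moments of the excesses.} Conditionally on the exceedance indicators, the excesses $\{Y_i\}_{i\in\mathcal I_u}$ are i.i.d.\ draws from $G_{\xi,\sigma}$ with $\xi=\xi_\theta(u)$ and $\sigma=\sigma_\theta(u)$. Since $\xi<1/2$, this distribution has finite mean $\mu=\sigma/(1-\xi)$ and variance $v=\sigma^2/((1-\xi)^2(1-2\xi))>0$. Consequently $\overline Y\xrightarrow{\mathbb P}\mu$ and $S_Y^2\xrightarrow{\mathbb P}v$.

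The main technical point is that these are averages over a random number $k_n$ of terms. I would handle it as follows. First, the exceedance excesses can be realized as the first $k_n$ elements of an i.i.d.\ GPD sequence $(Y_j)_{j\ge1}$ that is independent of $k_n$; this is valid by the usual thinning argument for i.i.d.\ samples. Next, for any $\varepsilon>0$ and any $M$, split
\[
\mathbb P\bigl(|\overline Y_{k_n}-\mu|>\varepsilon\bigr)\le \mathbb P(k_n<M)+\sup_{m\ge M}\mathbb P\bigl(|\overline Y_m-\mu|>\varepsilon\bigr).
\]
Finally, choose $M$ large using the WLLN for the fixed-index averages $\overline Y_m$, and then $n$ large using $k_n\xrightarrow{\mathbb P}\infty$. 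The same argument applied to $\frac1{k}\sum Y_i^2$ gives $\frac1k\sum Y_i^2\xrightarrow{\mathbb P}v+\mu^2$, which is finite because $\xi<1/2$. This yields $S_Y^2\xrightarrow{\mathbb P}v$. If one prefers to avoid the coupling, the same bound can be obtained by conditioning on $k_n$ and using independence of the excesses given the indicators.

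\emph{Step 3: MOM limits.} The map $(m,s^2)\mapsto\bigl((s^2-m^2)/(2s^2),\; m(s^2+m^2)/(2s^2)\bigr)$ is continuous wherever $s^2>0$. Direct substitution shows that at $(\mu,v)$ it returns exactly $(\xi,\sigma)$; this holds because $\mu^2/v=(1-2\xi)$, so $(v-\mu^2)/(2v)=\xi$, and a similar computation gives $\sigma$. The continuous mapping theorem then gives $(\widehat\xi,\widehat\sigma)\xrightarrow{\mathbb P}(\xi,\sigma)$.

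\emph{Step 4: continuity of $\mathcal C$.} The map $(\xi,\sigma,s)\mapsto\mathcal C$ in \eqref{eq:evt_cvar_functional} is continuous on $\{\xi<1,\ \sigma>0,\ s>0\}$, and this continuity holds across the two cases at $\xi=0$. The reason is that $(s^\xi-1)/\xi\to\log s$ as $\xi\to0$, so the function $\xi\mapsto(s^\xi-1)/\xi$, extended by $\log s$ at $0$, is continuous (indeed analytic). One must also check that $\widehat\xi\neq0$ versus $\widehat\xi=0$ causes no discontinuity; the extension just described ensures this. Combining Steps 1–3 with the continuous mapping theorem gives \eqref{eq:evt_objective_consistency}.

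The condition $u<q_{\alpha,\theta}$ ensures that the target functional is the relevant POT approximation, but it plays no role in the limit argument itself. I expect the random-index law of large numbers in Step 2 to be the only nonroutine step.
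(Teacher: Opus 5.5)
Your proposal is correct and follows the same route as the paper's proof: a law of large numbers for the exceedance frequency and for the first two excess moments, followed by the continuous mapping theorem applied to the MOM map in \autoref{eq:gpd_mom} and then to the POT-CVaR formula \autoref{eq:evt_cvar_functional}. You also make explicit three points the paper leaves implicit: the random-index law of large numbers for averages over $k_n$ exceedances, the check that the MOM map sends $(\mu,v)$ to $(\xi_\theta(u),\sigma_\theta(u))$, and the continuity of $\mathcal C$ across $\xi=0$.
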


\emph{Proof.}
Under $\xi_\theta(u)<1/2$, the first two moments of the GPD excess
distribution are finite. As $k_n(\theta;u)\rightarrow\infty$, their empirical
counterparts converge to the corresponding population moments. The MOM
estimates in \autoref{eq:gpd_mom} therefore converge, by the continuous
mapping theorem, to $\xi_\theta(u)$ and $\sigma_\theta(u)$. Moreover, the
empirical exceedance probability converges to $1-F_\theta(u)$. Applying the
continuous mapping theorem to the EVT-CVaR estimator in
\autoref{eq:evt_cvar_estimator} gives
\autoref{eq:evt_objective_consistency}. \hfill$\square$

Under Assumption~\ref{ass:pot_mom_limit}, the GPD representation is exact
above $u$. Since $u<q_{\alpha,\theta}$, the target quantile lies within this
tail region and the POT representation in
\autoref{eq:evt_cvar_functional} satisfies
\begin{equation}
    \mathcal C_{\theta,\alpha}(u)
    =
    J_\alpha(\theta).
    \label{eq:exact_tail_identity}
\end{equation}
Consequently,
\begin{equation}
    \widehat J^{\mathrm{EVT}}_{n,\alpha}(\theta;u)
    \xrightarrow{\mathbb P}
    J_\alpha(\theta).
    \label{eq:evt_true_cvar_consistency}
\end{equation}

The exact-GPD condition in Assumption~\ref{ass:pot_mom_limit} is a
population assumption and should be distinguished from the statistical
criterion used to select the EVT branch. In particular, failure to reject
the GPD hypothesis does not establish that the population excess
distribution is exactly GPD. The result therefore characterizes the
population behavior of the POT-MOM estimator under an exact-tail model,
rather than providing a consequence of the goodness-of-fit test itself.

We finally consider the effect of the empirical-CVaR fallback. Recall that
$A_n(\theta)$ denotes the event that an admissible threshold is selected and
the EVT branch of \autoref{eq:hybrid_evt_objective} is used.

\begin{corollary}[Consistency of the hybrid estimator]
\label{cor:hybrid_consistency}
Suppose that, for fixed $\theta$,
\begin{equation}
    \mathbf 1_{A_n(\theta)}
    \left|
        \widehat J^{\mathrm{EVT}}_{n,\alpha}
        \bigl(\theta;\widehat u_n(\theta)\bigr)
        -
        J_\alpha(\theta)
    \right|
    \xrightarrow{\mathbb P}0,
    \label{eq:selected_evt_consistency}
\end{equation}
and
\begin{equation}
    \mathbf 1_{A_n^c(\theta)}
    \left|
        \widehat J^{\mathrm{emp}}_{n,\alpha}(\theta)
        -
        J_\alpha(\theta)
    \right|
    \xrightarrow{\mathbb P}0.
    \label{eq:selected_empirical_consistency}
\end{equation}
Then the hybrid estimator satisfies
\begin{equation}
    \widehat J^{\mathrm{H}}_{n,\alpha}(\theta)
    \xrightarrow{\mathbb P}
    J_\alpha(\theta).
    \label{eq:hybrid_consistency}
\end{equation}
\end{corollary}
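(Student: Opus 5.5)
The plan is to use the fact that the events $A_n(\theta)$ and $A_n^c(\theta)$ partition the sample space, so the hybrid estimator error splits exactly into the two quantities controlled by the hypotheses. From the definition in \autoref{eq:hybrid_evt_objective},
\[
    \widehat J^{\mathrm{H}}_{n,\alpha}(\theta)
    =
    \mathbf 1_{A_n(\theta)}\,
    \widehat J^{\mathrm{EVT}}_{n,\alpha}\bigl(\theta;\widehat u_n(\theta)\bigr)
    +
    \mathbf 1_{A_n^c(\theta)}\,
    \widehat J^{\mathrm{emp}}_{n,\alpha}(\theta).
\]
Since $\mathbf 1_{A_n(\theta)}+\mathbf 1_{A_n^c(\theta)}=1$, we may write $J_\alpha(\theta)=\mathbf 1_{A_n(\theta)}J_\alpha(\theta)+\mathbf 1_{A_n^c(\theta)}J_\alpha(\theta)$. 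This gives the pointwise identity
\[
    \bigl|\widehat J^{\mathrm{H}}_{n,\alpha}(\theta)-J_\alpha(\theta)\bigr|
    =
    \mathbf 1_{A_n(\theta)}
    \bigl|\widehat J^{\mathrm{EVT}}_{n,\alpha}(\theta;\widehat u_n(\theta))-J_\alpha(\theta)\bigr|
    +
    \mathbf 1_{A_n^c(\theta)}
    \bigl|\widehat J^{\mathrm{emp}}_{n,\alpha}(\theta)-J_\alpha(\theta)\bigr|.
\]
It is an equality rather than only a triangle inequality, because on each outcome exactly one indicator equals one.

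Call the two terms on the right $D_n^{(1)}$ and $D_n^{(2)}$. By \autoref{eq:selected_evt_consistency} and \autoref{eq:selected_empirical_consistency}, both converge to zero in probability. A sum of two sequences converging in probability to zero also converges to zero in probability. Concretely, for any $\varepsilon>0$,
\[
    \mathbb P\bigl(|\widehat J^{\mathrm{H}}_{n,\alpha}(\theta)-J_\alpha(\theta)|>\varepsilon\bigr)
    \le
    \mathbb P(D_n^{(1)}>\varepsilon/2)+\mathbb P(D_n^{(2)}>\varepsilon/2)\to0 .
\]
This yields \autoref{eq:hybrid_consistency}.

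There is no substantive obstacle here. The one point to check is measurability: $A_n(\theta)$ must be an event, and $\widehat u_n(\theta)$, hence the EVT branch, must be a random variable on $A_n(\theta)$. This holds because the threshold-selection procedure is a measurable function of the sample, so I take it as implicit in the hypotheses. I would also remark that the conditions do not require $\mathbb P(A_n(\theta))$ to converge to anything, since each indicator already localizes its branch's error.
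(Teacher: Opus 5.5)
Your proof is correct and essentially matches the paper's: the paper also writes $|\widehat J^{\mathrm{H}}_{n,\alpha}(\theta)-J_\alpha(\theta)|$ as the exact sum of the two indicator-weighted error terms and concludes because both terms vanish in probability. Your explicit $\varepsilon/2$ bound and your measurability remark only spell out details the paper leaves implicit.
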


\emph{Proof.}
From \autoref{eq:hybrid_evt_objective},
\begin{align*}
    \left|
        \widehat J^{\mathrm{H}}_{n,\alpha}(\theta)
        -
        J_\alpha(\theta)
    \right|
    ={}&
    \mathbf 1_{A_n(\theta)}
    \left|
        \widehat J^{\mathrm{EVT}}_{n,\alpha}
        \bigl(\theta;\widehat u_n(\theta)\bigr)
        -
        J_\alpha(\theta)
    \right|
    \\
    &+
    \mathbf 1_{A_n^c(\theta)}
    \left|
        \widehat J^{\mathrm{emp}}_{n,\alpha}(\theta)
        -
        J_\alpha(\theta)
    \right|.
\end{align*}
Both terms converge to zero in probability by
\autoref{eq:selected_evt_consistency} and
\autoref{eq:selected_empirical_consistency}, which proves
\autoref{eq:hybrid_consistency}. \hfill$\square$

Corollary~\ref{cor:hybrid_consistency} does not require the probability of
selecting the EVT branch to converge to one. Instead, it requires the
estimator used on each branch to approach the same population target.
For the empirical branch, this corresponds to consistency of the empirical
CVaR estimator. For the EVT branch, however,
\autoref{eq:selected_evt_consistency} is an additional condition because
the threshold $\widehat u_n(\theta)$ is selected from the data. The results therefore establish local differentiability of the active
POT-MOM objective and consistency of the risk estimator under the stated
tail conditions.


\section{Numerical Study}
\label{se:numerical}

We now investigate the numerical behavior of the methodology developed in Section~\ref{se:methodology}. We first consider a controlled experiment for which the optimal policy is known analytically, and subsequently move to a deep-hedging problem where the policy is represented by a neural network and catastrophic losses arise from rare market scenarios.

\subsection{Controlled Experiment}
\label{se:controlled_experiment}

We begin with a controlled experiment to examine whether the proposed optimization procedure can recover a known solution. Following the setting considered in \cite{davar2025catastrophic}, let the loss $X_{\boldsymbol{\theta}}$ follow a generalized Pareto distribution with shape parameter $\xi=0.4$ and scale parameter depending on the two-dimensional policy parameter $\boldsymbol{\theta}=(\theta_1,\theta_2)$ through
\begin{equation}
    \sigma(\boldsymbol{\theta})
    =
    \log\left(1+(\theta_1-2)^2+(\theta_2+1)^2\right)+2.
    \label{eq:controlled_scale}
\end{equation}
In this controlled setting, the policy directly determines $\theta_1$ and $\theta_2$, which in turn determine the distribution of the terminal loss through $\sigma(\boldsymbol{\theta})$. Simulated realizations of $X_{\boldsymbol{\theta}}$ therefore play the role of the terminal outcomes generated under the policy, and the optimization problem reduces to finding the policy parameters that minimize their tail risk. In particular, for $\xi\neq 0$ and $\xi<1$, the CVaR of the generalized Pareto loss can be expressed as
\begin{equation}
    \operatorname{CVaR}_{\alpha}(X_{\boldsymbol{\theta}})
    =
    \frac{\sigma(\boldsymbol{\theta})}{1-\xi}
    \left[
        \frac{(1-\alpha)^{-\xi}-1}{\xi}
        +1
    \right],
    \label{eq:controlled_cvar}
\end{equation}
so that the dependence of the objective on the policy parameters is entirely governed by $\sigma(\boldsymbol{\theta})$. Since the multiplicative dependence on $\sigma(\boldsymbol{\theta})$ is positive for fixed $\alpha$ and $\xi<1$, minimizing $\operatorname{CVaR}_{\alpha}(X_{\boldsymbol{\theta}})$ is equivalent to minimizing $\sigma(\boldsymbol{\theta})$. The optimal policy is therefore available analytically and satisfies
\begin{equation}
    \boldsymbol{\theta}^{*}=(2,-1).
    \label{eq:controlled_optimum}
\end{equation}
This provides a direct benchmark against which the numerical optimization can be evaluated.

Starting from $\boldsymbol{\theta}_0=(4,-3)$, we minimize $\operatorname{CVaR}_{0.99}$ using $2{,}000$ simulated observations per iteration and compare the empirical-CVaR and EVT-based objectives. Both procedures progressively move the policy parameters toward the analytical optimum. After $100$ iterations, the EVT-based procedure reaches approximately $(1.99,-0.99)$, which is close to $\boldsymbol{\theta}^{*}$, see \autoref{fig:control_example}. The experiment therefore provides controlled numerical evidence that the differentiable EVT objective can guide gradient-based optimization toward the correct policy in this tractable setting. This observation is complementary to, but distinct from, the theoretical results of Section~\ref{se:theory}.

\begin{figure}[H]\centering
    \caption{Optimization trajectories obtained using the empirical and EVT-based CVaR estimators for the controlled experiment.}
\begin{threeparttable}
    \includegraphics[width=16cm]{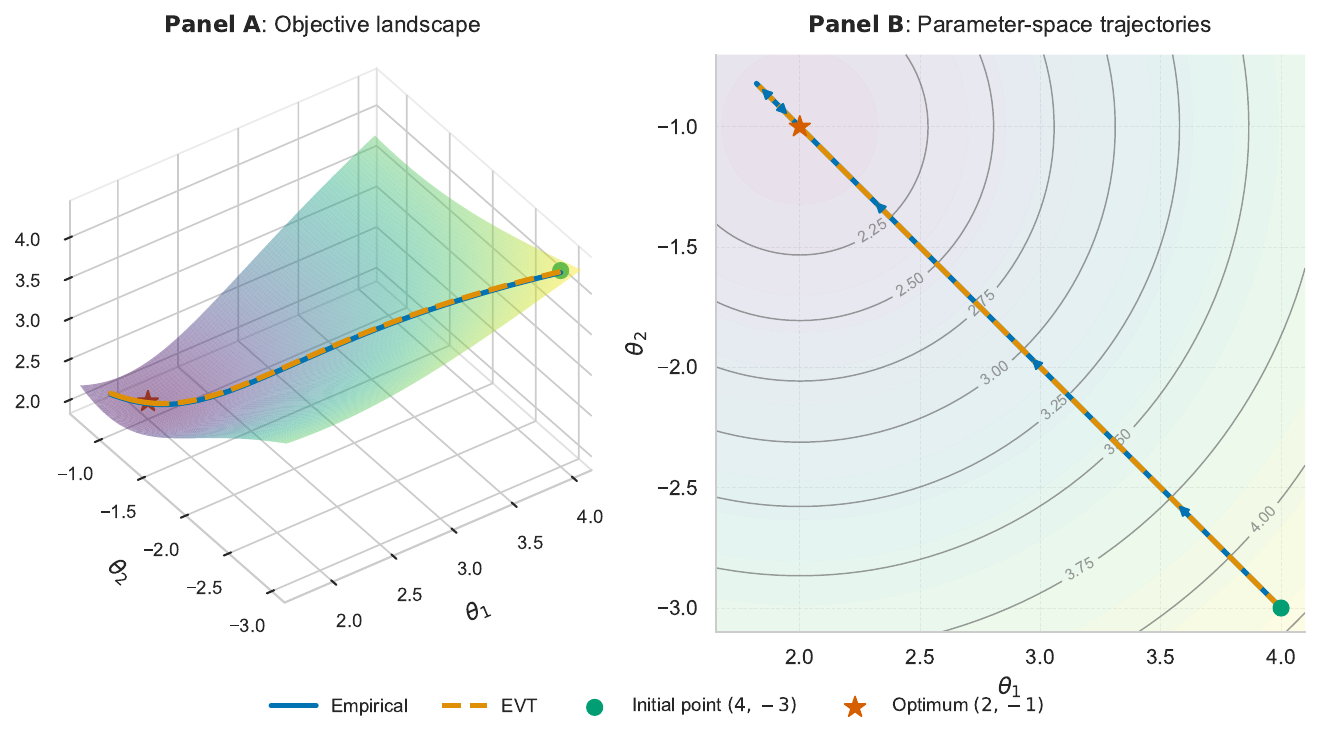}
    \begin{tablenotes}
    \item \footnotesize Panel A illustrates the optimization trajectories over the parameter-dependent scale function, while Panel B presents the corresponding trajectories in the parameter space. Both procedures are initialized at $(\theta_1,\theta_2)=(4,-3)$, with the theoretical minimum located at $(2,-1)$. The empirical and EVT-based approaches follow similar trajectories toward the minimum, initially overshooting it before stabilizing in its neighborhood.
    \end{tablenotes}
\end{threeparttable}
    \label{fig:control_example}
\end{figure}
\subsection{Deep Hedging Framework}
\label{se:deep_hedging}

We next consider a dynamic option-hedging problem, which provides a higher-dimensional policy parameterization and a direct financial interpretation of extreme losses. Consistently with the deep-hedging formulation, the agent determines at each rebalancing date the position in the underlying asset as a function of the current state, with the policy represented by the neural network introduced in Section~\ref{se:nn_optimization}. The terminal hedging error constitutes the loss entering the risk objective.

The underlying asset follows a Merton-type jump-diffusion process,
\begin{equation}
    \frac{dS_t}{S_{t^-}}
    =
    \mu\,dt+\sigma\,dW_t+
    \left(e^{J_t}-1\right)dN_t,
    \label{eq:jump_diffusion}
\end{equation}
where $W_t$ is a Brownian motion, $N_t$ is a Poisson process with intensity $\lambda$, and the log-jump sizes are Gaussian with mean $\nu$ and standard deviation $\delta$. Under the risk-neutral measure, the drift is adjusted by $\lambda\left(e^{\nu+\delta^2/2}-1\right)$ when computing the initial option value.

The main experiment considers $S_0=100$, a European call with strike $K=140$ and $63$ daily rebalancing periods. The annual parameters are $\mu=0.01$, $\sigma=0.20$, $\lambda=0.1$, $\nu=0.4$, $\delta=0.2$, and $r=0.005$. The option is therefore initially out-of-the-money, while the positive jumps generate infrequent but potentially large terminal payoffs. This setting is particularly useful for the present study: the loss distribution contains a small number of observations far into the upper tail, precisely the regime in which direct empirical estimation of catastrophic-level CVaR becomes data demanding.

The hedging policy is represented by a feedforward neural network whose state space is restricted to the log-price of the underlying asset, time-to-maturity, and the previous hedging position. The network consists of four hidden layers with $56$ neurons each and a single output representing the position in the underlying asset, following the standard architecture considered for asymmetric penalty measures in \cite{François2025_stat_arbitrage}. The output layer incorporates the dynamic activation function proposed in \cite{François2025}, with leverage constraint $B=100$. This constraint prevents the agent from adopting doubling strategies, whereby increasingly large positions are taken following losses in an attempt to recover previous hedging errors, which can lead to unrealistic policies and distort the assessment of tail-risk performance. The network is trained using the ADAM optimizer with an initial learning rate of $0.0005$ and dropout regularization with probability $p=0.5$. The empirical-CVaR and EVT-based policies use the same state representation, network architecture, leverage constraint, optimizer, learning rate, dropout specification, training paths, and epoch budget, such that the tail-risk objective is the experimental factor being changed. Both policies are trained to minimize CVaR at the confidence level $\alpha=0.999$.

\subsection{Neural-Network Experiments}
\label{se:nn_experiments}

We first investigate whether replacing empirical CVaR with the EVT-based objective affects the training behavior of neural-network policies when extreme observations are not especially prominent. To account for the stochasticity of neural-network optimization, we train otherwise identical feedforward policies over four independent runs and compare their validation CVaR throughout $50$ training epochs. The resulting trajectories are presented in \autoref{fig:nn_training}.

A clear difference appears during the initial stages of training. The EVT-based objective exhibits a faster reduction in mean validation CVaR, while the empirical approach decreases more gradually. This difference is particularly pronounced during approximately the first $15$ epochs. The dispersion across independent runs is also considerably smaller for the EVT-based approach during this period, whereas the empirical-CVaR policies exhibit greater variability in their validation trajectories. After approximately $20$ epochs, however, the mean validation CVaR of both approaches becomes similar and remains close throughout the remainder of training. This suggests that the EVT-based objective induces more stable optimization behavior during the early training stage, which is
consistent with the policies reaching the region of their eventual validation performance with fewer training iterations.

\begin{figure}[h]\centering
    \caption{Validation $\mathrm{CVaR}_{0.999}$ throughout
training under the empirical and EVT-based optimization approaches.}
\begin{threeparttable}
    \includegraphics[width=14cm]{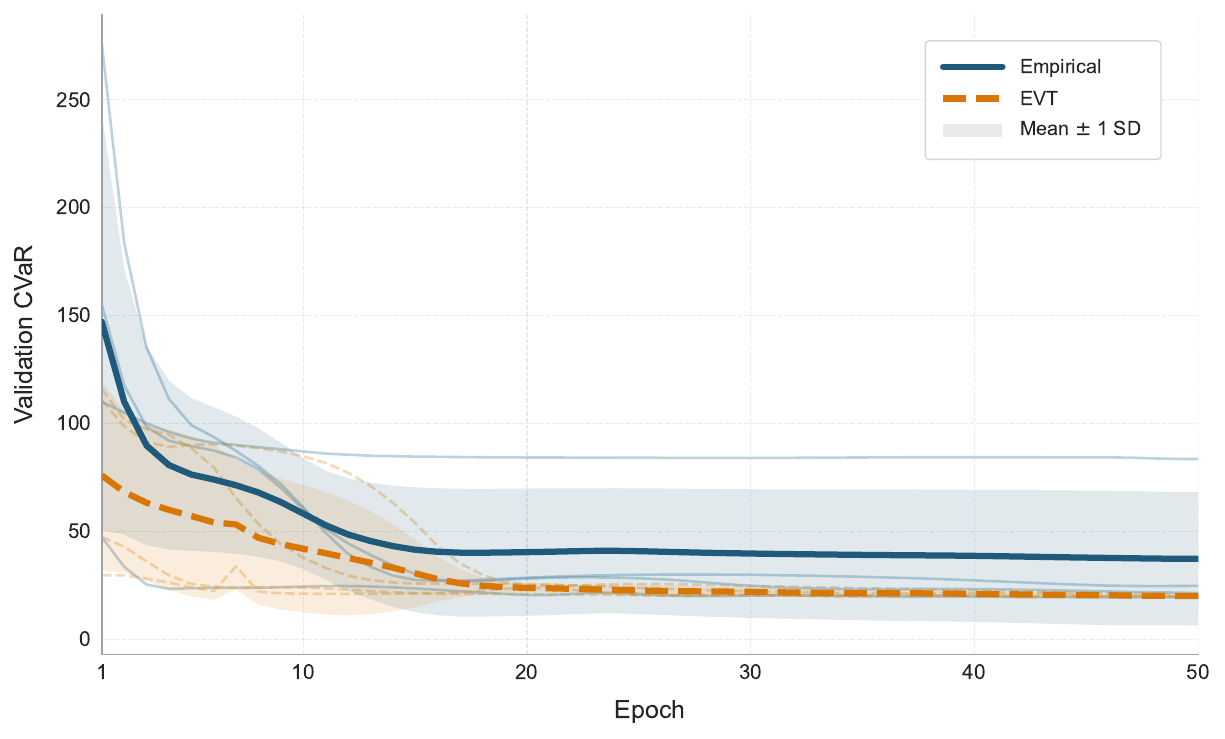}
    \begin{tablenotes}
    \item \footnotesize Validation $\mathrm{CVaR}_{0.999}$ is reported over $50$ training epochs for the empirical and EVT-based approaches. The solid and dashed curves represent the mean validation $\mathrm{CVaR}_{0.999}$ across four independent runs for the empirical and EVT-based approaches, respectively, while the shaded regions indicate one standard deviation around the mean. The lighter curves correspond to the individual runs.
    \end{tablenotes}
\end{threeparttable}
    \label{fig:nn_training}
\end{figure}

The experiment therefore indicates that the main difference between the two objectives in this setting concerns the dynamics and stability of the optimization rather than the validation risk eventually attained. This behavior is consistent with the statistical motivation of the proposed estimator: when relatively few observations determine the empirical tail-risk objective, variation in those observations can propagate directly into the optimization step, whereas the POT approximation uses the exceedance distribution to estimate the relevant tail quantity. Once training progresses and both policies reach a similar region of the parameter space, the difference between their validation performance becomes substantially smaller. This raises a more demanding question for the proposed methodology: whether the difference between the two objectives becomes more consequential when policy performance is evaluated specifically in the extreme tail of the loss distribution.

\subsection{Extreme-Loss Analysis}
\label{se:extreme_loss}

The previous experiment suggests that the empirical and EVT-based objectives eventually attain similar validation risk under the general training setting. We now investigate whether this similarity persists when the learned policies are evaluated specifically under extreme market realizations. Both policies are trained using paths generated from the Merton jump-diffusion model described in \autoref{se:deep_hedging}.

To define the extreme scenarios independently of the learned policies, let
$$
H_T := (S_T-K)^+
$$
denote the unhedged terminal option payoff, and let $q_p^H$ denote its $p$-quantile under the original market distribution. We define the extreme-event set as
$$
\mathcal{E}_p := \left\{H_T > q_p^H\right\}.
$$
For the present experiment, we set $p=0.999$, such that $\mathcal{E}_{0.999}$ corresponds to market realizations in the upper $0.1\%$ tail of the unhedged option-payoff distribution.

We consider two independent out-of-sample evaluation samples. The first consists of 100,000 paths generated directly from the original market distribution. The second is constructed by independently simulating additional paths from the same market model and retaining complete trajectories belonging to $\mathcal{E}_{0.999}$ until 3,000 such trajectories are obtained. Thus, the second sample concentrates the evaluation on rare and severe terminal market outcomes while leaving the underlying data-generating process unchanged. For brevity, we refer to this conditional extreme-event sample as the heavy-tail regime throughout the remainder of the numerical study. This construction allows us to compare the behavior of the learned policies under the general market distribution with their behavior conditional on extreme market realizations.

\autoref{fig:extreme_loss} shows the resulting out-of-sample hedging-error distributions. Under the general distribution in Panel A, the two policies exhibit very similar behavior. Both have a mean hedging error of $-1.31$, while their medians are $-0.63$ and $-0.40$ for the empirical-CVaR and EVT-based policies, respectively. The upper parts of the distributions are also comparable: the $95$th percentiles are $6.23$ and $6.11$, while the maximum observed errors are $50.09$ and $48.80$, respectively. Thus, under the general out-of-sample distribution, there is little evidence of a material difference in hedging performance between the two policies.

A markedly different behavior emerges when the evaluation is concentrated on extreme scenarios. In Panel B, the distribution of hedging errors for the EVT-trained policy shifts toward smaller values relative to the empirical-CVaR policy. The mean hedging error decreases from $17.49$ to $10.03$, while the median decreases from $16.52$ to $9.13$. Importantly, this difference remains visible further into the upper tail: the $70$th percentile decreases from $27.12$ to $17.50$, the $95$th percentile from $49.22$ to $35.46$, and the maximum observed hedging error from $114.27$ to $84.36$. The separation is therefore not driven exclusively by a small number of maximum observations; it is visible across several regions of the conditional hedging-loss distribution under the extreme-scenario evaluation.

\begin{figure}[H]\centering
    \caption{Distribution of out-of-sample hedging errors under the empirical and EVT-based optimization approaches.}
\begin{threeparttable}
    \includegraphics[width=16cm]{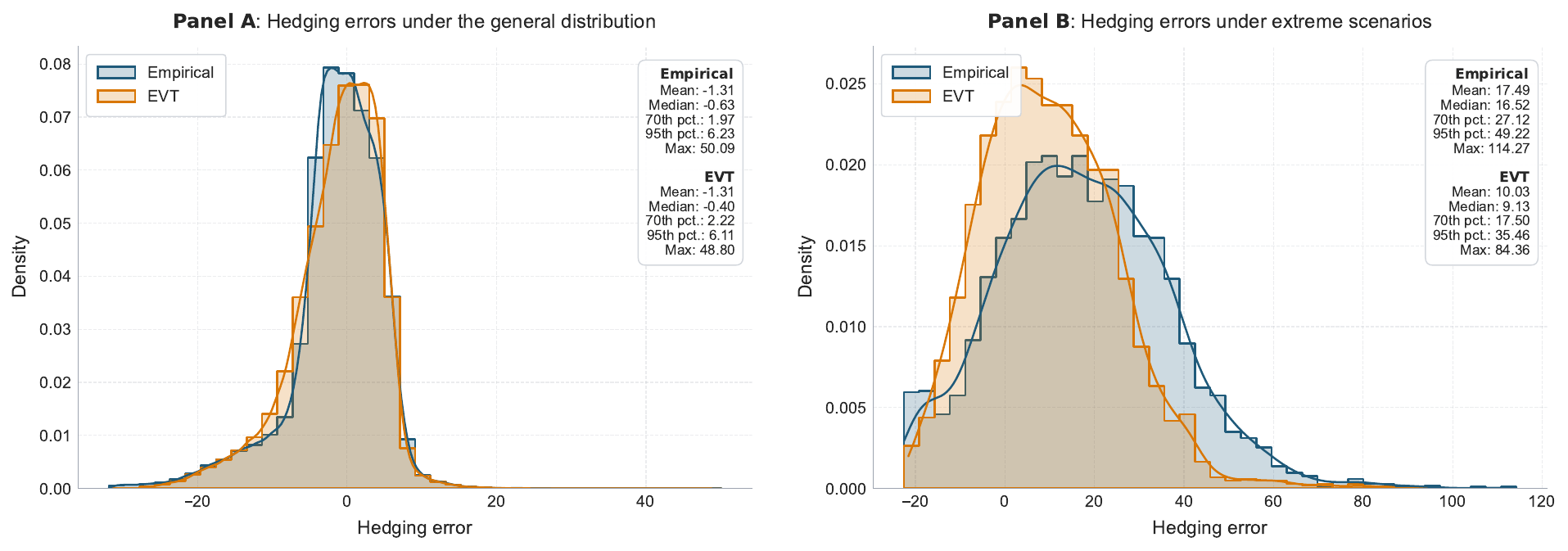}
    \begin{tablenotes}
    \item \footnotesize Hedging errors are evaluated on 100,000 and 3,000 out-of-sample paths under the general distribution and the conditional extreme-scenario evaluation, respectively. Panel A considers paths generated from the original market distribution, while Panel B considers trajectories from the same market model conditional on the event $\mathcal{E}_{0.999}$. Summary statistics report the mean, median, 70th and 95th percentiles, and maximum observed hedging error.
    \end{tablenotes}
\end{threeparttable}
    \label{fig:extreme_loss}
\end{figure}

These results suggest that the difference between the two optimization approaches becomes more pronounced under extreme market realizations. Under the general out-of-sample distribution, the two policies produce nearly indistinguishable mean errors and similar upper-tail statistics. Conditional on extreme market realizations, however, the EVT-trained policy produces smaller hedging errors throughout a substantial portion of the distribution, including its upper tail. This evidence is consistent with the hypothesis that explicitly modeling the tail during
optimization becomes consequential when the learned policy is exposed to scenarios in which extreme losses are more prevalent. Since the second evaluation is explicitly conditional on $\mathcal{E}_{0.999}$, however, these results should not be interpreted as evidence of an equivalent reduction in unconditional risk under the original market distribution.

\subsection{Sensitivity to Training Sample Size}
\label{se:sample_size}

The previous results show that the difference between empirical-CVaR and EVT-based optimization becomes more pronounced when the learned policies are exposed to extreme scenarios. This raises a related question: whether this behavior depends on the amount of data available during training. This is particularly relevant at the catastrophic confidence level $\alpha=0.999$, where only a small fraction of the simulated losses directly determines the empirical tail-risk objective. We therefore examine whether reducing the number of training paths materially changes the out-of-sample behavior of the resulting policies.

We train the empirical-CVaR and EVT-based policies using $50{,}000$, $100{,}000$, $200{,}000$, and $400{,}000$ simulated paths while maintaining the remaining training specifications unchanged. Each trained policy is subsequently evaluated on 100,000 out-of-sample
paths from the general distribution and on the 3,000-path conditional extreme-scenario sample defined in \autoref{se:extreme_loss}. \autoref{fig:sample_size} reports the resulting distributions of out-of-sample hedging errors.

Under the general distribution, shown in Panel A, the hedging-error distributions remain relatively stable as the number of training paths changes. The empirical-CVaR and EVT-based policies also exhibit broadly comparable out-of-sample behavior at each training-sample size. In particular, reducing the training sample from $400{,}000$ to $50{,}000$ paths does not produce a clear deterioration in the central part of either distribution. Thus, within the range of sample sizes considered here, the general out-of-sample performance appears relatively insensitive to the amount of training data.

The heavy-tail evaluation in Panel B provides a different perspective. Across all four training-sample sizes, the EVT-trained policies exhibit hedging-error distributions shifted toward smaller values relative to their empirical-CVaR counterparts. Importantly, this separation remains visible when the number of training paths is reduced. At the same time, the distributions do not exhibit a clear monotonic improvement as the training sample increases from $50{,}000$ to $400{,}000$. The experiment therefore does not support the stronger claim that increasing the number of training trajectories systematically improves either policy over this range. Rather, it suggests that the difference observed under the conditional extreme-scenario evaluation persists across the training-sample sizes considered.

\begin{figure}[H]\centering
    \caption{Out-of-sample hedging errors across training-sample sizes under the empirical and EVT-based optimization approaches.}
\begin{threeparttable}
    \includegraphics[width=16cm]{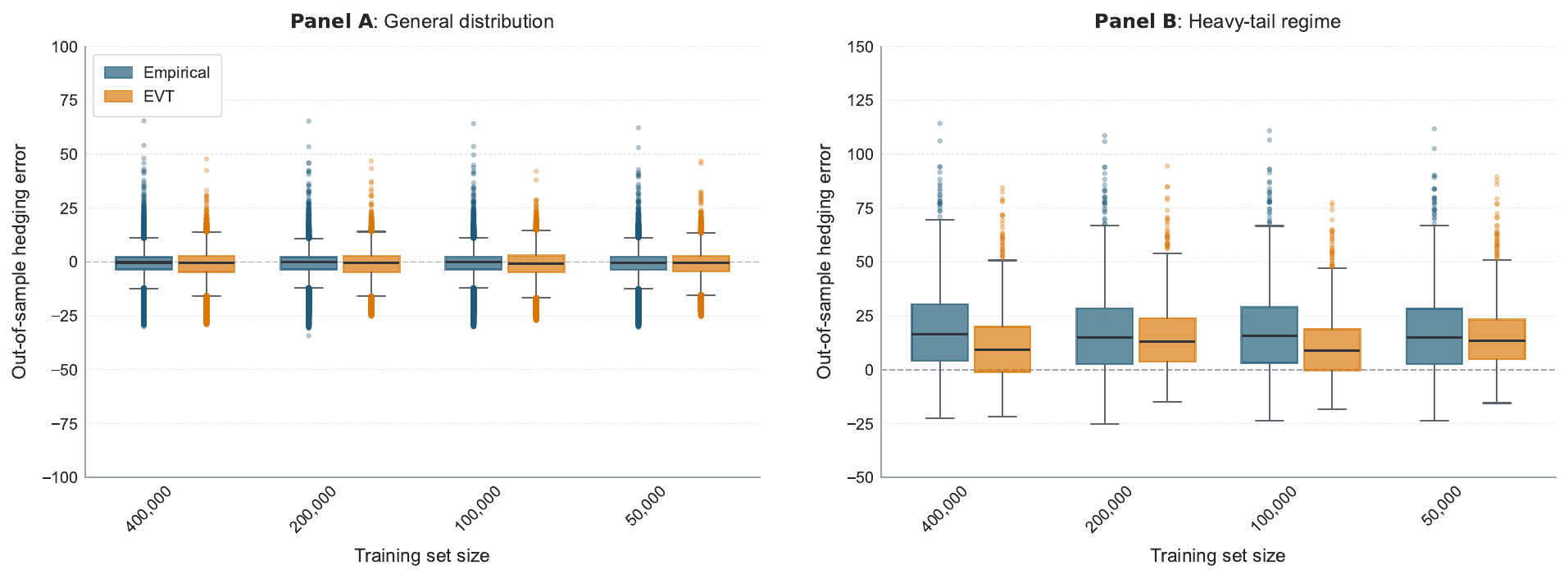}
    \begin{tablenotes}
    \item \footnotesize Policies are trained using $50{,}000$, $100{,}000$, $200{,}000$, and $400{,}000$ simulated paths. Panel A evaluates each policy on 100,000 independent paths from the general distribution, while Panel B evaluates each policy on the 3,000-path conditional extreme-scenario. Boxplots report the resulting out-of-sample hedging-error distributions for the empirical-CVaR and EVT-based policies.
    \end{tablenotes}
\end{threeparttable}
    \label{fig:sample_size}
\end{figure}

These results indicate that the relative behavior of the two optimization approaches is not driven exclusively by the largest training sample. Under the general distribution, changing the number of training paths has a limited visible effect on out-of-sample hedging errors, while under the heavy-tail regime the EVT-based policies exhibit distributions concentrated at smaller hedging errors at each of the training-sample sizes considered. This provides evidence that the extreme-loss behavior documented in Section~\ref{se:extreme_loss} persists when substantially fewer training trajectories are available.


\section{Conclusion}
\label{se:conclusion}

This study extends the EVT-based risk-aware reinforcement learning framework considered in the baseline methodology to policies parameterized by neural networks. The main limitation of the original approach in this setting arises from its reliance on finite-difference gradients, whose computational cost increases with the number of policy parameters. We address this limitation by expressing the POT-CVaR estimator as a differentiable empirical objective and computing its gradient jointly with respect to the neural-network parameters through automatic differentiation. The resulting framework preserves the EVT treatment of extreme losses while making it applicable to substantially richer policy parameterizations.

The theoretical results establish the properties supporting this construction. In particular, for a fixed threshold and under the stated local regularity conditions, the finite-sample POT-MOM objective is locally differentiable and automatic differentiation recovers its exact gradient. Moreover, as the number of threshold exceedances increases, the POT-MOM estimator converges in probability to its corresponding population POT-CVaR functional under the assumptions established in Section~\ref{se:theory}. These results characterize the differentiability and statistical behavior of the EVT component used within policy optimization, while keeping distinct the approximation of the true CVaR and the optimization of the neural-network parameters.

The numerical experiments complement these results from two perspectives. First, the controlled experiment provides an analytically tractable benchmark and shows that the proposed gradient-based procedure approaches the known optimal policy parameters. Second, the deep-hedging experiment illustrates the behavior of the methodology in a substantially more complex neural-network optimization problem. Under the general out-of-sample market distribution, the EVT and empirical-CVaR approaches ultimately provide comparable hedging performance, although the EVT-based objective exhibits a faster initial reduction and lower between-run variability in validation CVaR during training. The distinction becomes more pronounced under the conditional extreme-scenario evaluation: the EVT-trained policies exhibit substantially smaller hedging losses, while performance under the general market distribution remains broadly comparable.

Together, these results suggest that explicitly modeling the loss tail can be particularly useful when neural-network policies are trained to manage rare but severe outcomes. The benefit is therefore not that EVT uniformly improves policy
performance, but rather that it uses the exceedance distribution to model the tail in regimes where empirical CVaR relies on relatively few observations beyond the target quantile. Extending the analysis to a broader assessment of alternative tail models and risk measures would help determine how generally these benefits persist.


\vspace{-10pt}
\titleformat{\section}{\normalfont\large\bfseries} {\thesection}{1em}{#1}
\bibliographystyle{apalike}
\bibliography{references}  

@article{François2025,
  title={Enhancing Deep Hedging of Options with Implied Volatility Surface Feedback Information},
  author={François, Pascal and Gauthier, Geneviève and Godin, Fr{\'e}d{\'e}ric and P{\'e}rez Mendoza, Carlos Octavio},
  journal={Working paper},
  volume={},
  number={},
  pages={},
  year={2024},
  publisher={}
}

@article{François2025_stat_arbitrage,
  title={Is the Difference between Deep Hedging and Delta Hedging a Statistical Arbitrage?},
  author={François, Pascal and Gauthier, Geneviève and Godin, Fr{\'e}d{\'e}ric and P{\'e}rez Mendoza, Carlos Octavio},
  journal={Working paper},
  volume={},
  number={},
  pages={},
  year={2024},
  publisher={}
}

@article{davar2025catastrophic,
  title={Catastrophic-Risk-Aware Reinforcement Learning with Extreme-Value-Theory-Based Policy Gradients},
  author={Davar, Parisa and Godin, Fr{\'e}d{\'e}ric and Garrido, Jos{\'e}},
  journal={The Journal of Finance and Data Science},
  volume={11},
  pages={100165},
  year={2025},
  doi={10.1016/j.jfds.2025.100165}
}

@article{bader2018automated,
  title={Automated threshold selection for extreme value analysis via ordered goodness-of-fit tests with adjustment for false discovery rate},
  author={Bader, Brian and Yan, Jun and Zhang, Xuebin},
  journal={The Annals of Applied Statistics},
  year={2018},
  volume={12},
  number={1},
  pages={310--329}
}

@article{balkema1974residual,
  title={Residual life time at great age},
  author={Balkema, August A and De Haan, Laurens},
  journal={The Annals of Probability},
  volume={2},
  number={5},
  pages={792--804},
  year={1974},
  publisher={Institute of Mathematical Statistics}
}

@article{buehler2019deep,
  title={Deep hedging},
  author={Buehler, Hans and Gonon, Lukas and Teichmann, Josef and Wood, Ben},
  journal={Quantitative Finance},
  volume={19},
  number={8},
  pages={1271--1291},
  year={2019},
  publisher={Taylor \& Francis}
}

@book{coles2001introduction,
  title={An introduction to statistical modeling of extreme values},
  author={Coles, Stuart and Bawa, Joanna and Trenner, Lesley and Dorazio, Pat},
  volume={208},
  year={2001},
  publisher={Springer}
}

@book{mcneil2015quantitative,
  title={Quantitative risk management: concepts, techniques and tools-revised edition},
  author={McNeil, Alexander J and Frey, R{\"u}diger and Embrechts, Paul},
  year={2015},
  publisher={Princeton university press}
}

@article{pickands1975statistical,
  title={Statistical inference using extreme order statistics},
  author={Pickands III, James},
  journal={The Annals of Statistics},
  pages={119--131},
  year={1975},
  publisher={JSTOR}
}

@inproceedings{troop2021bias,
  title={Bias-corrected peaks-over-threshold estimation of the cvar},
  author={Troop, Dylan and Godin, Fr{\'e}d{\'e}ric and Yu, Jia Yuan},
  booktitle={Uncertainty in Artificial Intelligence},
  pages={1809--1818},
  year={2021},
  organization={PMLR}
}

@article{rockafellar2002conditional,
  title={Conditional value-at-risk for general loss distributions},
  author={Rockafellar, R Tyrrell and Uryasev, Stanislav},
  journal={Journal of Banking \& Finance},
  volume={26},
  number={7},
  pages={1443--1471},
  year={2002},
  publisher={Elsevier}
}

\newpage

\appendix

\appendix

\appendix

\section{Details for the EVT-based training approach}
\label{appen:EVTtraining}

The EVT-based training approach consists of three main steps. First, the threshold above which the loss distribution is approximated by a GPD is selected using a sequence of Anderson--Darling goodness-of-fit tests combined with the ForwardStop procedure. Second, conditional on the selected threshold, the CVaR training objective is estimated using the POT-MOM estimator when sufficient exceedances are available, with empirical CVaR used as a fallback otherwise. Third, the resulting objective is differentiated with respect to the neural-network parameters using automatic differentiation. Threshold selection is performed outside the differentiation step, and the selected threshold is treated as fixed while the active finite-sample objective is differentiated.

\subsection{Threshold selection}
\label{appen:threshold_selection}

Let $\{X_i\}_{i=1}^{n}$ denote the sample of losses generated under the current policy. We consider a grid of candidate percentiles $\mathcal{P}=\{p_1,\ldots,p_K\}$. For each candidate percentile, the corresponding threshold is obtained from the empirical distribution, a GPD is fitted to the resulting excesses using the method of moments, and an Anderson--Darling goodness-of-fit statistic and its corresponding $p$-value are computed. Candidate thresholds associated with a shape parameter above the prescribed cutoff are discarded. The ForwardStop procedure is then
applied to the sequence of retained $p$-values to determine the threshold used in the CVaR estimation. For the numerical experiments, we consider a grid $\mathcal{P}$ of
$20$ equally spaced candidate percentiles between $0.80$ and $0.99$, with ForwardStop significance level $\gamma=0.1$ and shape-parameter cutoff $\xi_{\max}=0.99$.

\begin{algorithm}[H]
\caption{Threshold selection procedure.}
\label{algo:EVT_threshold}
\begin{algorithmic}
\Require Loss sample $\{X_i\}_{i=1}^{n}$, percentile grid
$\mathcal{P}=\{p_1,\ldots,p_K\}$, significance level $\gamma$, and
shape-parameter cutoff $\xi_{\max}$.

\State Initialize the lists of admissible candidates $\mathbb{A}$ and
$p$-values $\mathbb{P}$

\For{$j=1$ to $K$}
    \Comment{Evaluate candidate threshold}
    \State Set $u_j$ equal to the empirical $p_j$-quantile of
    $\{X_i\}_{i=1}^{n}$
    \State Compute the excesses
    $\mathcal{Y}_j=\{X_i-u_j:X_i>u_j\}$
    \State Estimate $(\widehat{\xi}_j,\widehat{\sigma}_j)$ from
    $\mathcal{Y}_j$ using the method of moments
    \If{$\widehat{\xi}_j\leq\xi_{\max}$}
        \State Compute the Anderson--Darling statistic $A_j^2$ for the
        fitted GPD
        \State Compute its corresponding $p$-value $p_j^{\mathrm{AD}}$
        \State Append
        $(u_j,\widehat{\xi}_j,\widehat{\sigma}_j,p_j)$ to $\mathbb{A}$
        and $p_j^{\mathrm{AD}}$ to $\mathbb{P}$
    \EndIf
\EndFor

\If{$\mathbb{A}=\emptyset$}
    \State No admissible EVT threshold is identified
\Else
    \For{$k=1$ to $\#\mathbb{P}$}
        \State Calculate
        $W_k=\frac{1}{k}\sum_{j=1}^{k}-\log(1-p_j^{\mathrm{AD}})$
    \EndFor
    \State Determine the ForwardStop index from
    $\{k:W_k\leq\gamma\}$
    \State Select the corresponding candidate threshold $u$
\EndIf

\State \Return selected threshold $u$, if available
\end{algorithmic}
\end{algorithm}

\subsection{Hybrid EVT--empirical CVaR estimation}
\label{appen:hybrid_cvar}

Once a threshold $u$ has been selected, it is treated as fixed in the differentiable CVaR computation. The excesses above $u$ are used to estimate the GPD parameters through the method of moments. When the sample contains more than one exceedance, the POT-MOM estimator is used. Otherwise, the
procedure falls back to the empirical CVaR computed directly from the upper $\alpha$-tail of the loss sample.

\begin{algorithm}[H]
\caption{Hybrid EVT--empirical CVaR estimation.}
\label{algo:hybrid_cvar}
\begin{algorithmic}
\Require Loss sample $\{X_i\}_{i=1}^{n}$, CVaR level $\alpha$, $\varepsilon = 10^{-8}$, and
selected threshold $u$.

\State Compute the excesses
$\mathcal{Y}=\{X_i-u:X_i>u\}$ and let $n_u=\#\mathcal{Y}$

\If{$n_u>1$}
    \Comment{EVT branch}
    \State Compute the sample mean $\bar{Y}$ and sample variance $S_Y^2$
    of the excesses
    \State Calculate the MOM shape estimate
    $\widehat{\xi}_{\text{MOM}}=\frac{1}{2}\left(1-\frac{\bar{Y}^{2}}{S_Y^{2}}\right)$
    \State Set $\widehat{\xi}
=
\min\left\{
\max\left\{\widehat{\xi}_{\mathrm{MOM}},0\right\},
\frac{1}{2}-\varepsilon
\right\}.$
    \State Calculate
    $\widehat{\sigma}=\bar{Y}(1-\widehat{\xi})$
    \State Estimate the exceedance probability
    $\widehat{p}_u=\frac{1}{n}\sum_{i=1}^{n}\mathds{1}_{\{X_i>u\}}$
    \State Set $s=\widehat{p}_u/(1-\alpha)$
    \If{$|\widehat{\xi}|<10^{-6}$}
        \State Calculate
        $\widehat{\mathrm{CVaR}}_{\alpha}
        =u+\widehat{\sigma}\left(\log(s)+1\right)$
    \Else
        \State Calculate
        $\widehat{\mathrm{CVaR}}_{\alpha}
        =u+\frac{\widehat{\sigma}}{1-\widehat{\xi}}
        \left(1+\frac{s^{\widehat{\xi}}-1}{\widehat{\xi}}\right)$
    \EndIf
\Else
    \Comment{Empirical fallback}
    \State Sort the losses
    $X_{(1)}\leq\cdots\leq X_{(n)}$
    \State Set $j_{\alpha}=\lfloor\alpha n\rfloor$
    \State Calculate
    $\widehat{\mathrm{CVaR}}_{\alpha}
    =\frac{1}{n-j_{\alpha}}\sum_{j=j_{\alpha}+1}^{n}X_{(j)}$
\EndIf

\State \Return $\widehat{\mathrm{CVaR}}_{\alpha}$
\end{algorithmic}
\end{algorithm}

\subsection{Training approach for the neural-network policy}
\label{appen:EVTtraining_strategy}

We now combine the threshold-selection and CVaR-estimation procedures within the neural-network optimization. At training iteration $m$, the complete sample of losses is generated under the current policy $\pi_{\theta_m}$. The threshold-selection procedure in Algorithm~\ref{algo:EVT_threshold} is then applied to this sample. Conditional on the selected threshold, the hybrid CVaR estimator in Algorithm~\ref{algo:hybrid_cvar} defines the objective used to update the policy. Importantly, the selected threshold is held fixed during differentiation. Consequently, gradients propagate through the simulated losses, excesses, empirical moments, GPD parameter estimates, and CVaR estimator, but not through the threshold-selection procedure.

\begin{algorithm}[H]
\caption{EVT-based neural-network training approach.}
\label{algo:EVT_training}
\begin{algorithmic}
\Require Initial policy parameters $\theta_1$, CVaR level $\alpha$,
learning-rate sequence $\{\eta_m\}_{m=1}^{M}$, number of training iterations
$M$, and sample size $n$.

\For{$m=1$ to $M$}

    \Comment{Generate the complete loss sample}
    \State Generate $n$ trajectories under the current policy
    $\pi_{\theta_m}$
    \State Compute the corresponding loss sample
    $\{X_{\theta_m,i}\}_{i=1}^{n}$

    \Comment{Select threshold}
    \State Apply Algorithm~\ref{algo:EVT_threshold} to
    $\{X_{\theta_m,i}\}_{i=1}^{n}$
    \State Obtain the selected threshold $u_m$, when available

    \Comment{Construct the CVaR training objective}
    \If{an admissible threshold $u_m$ is identified}
        \State Hold $u_m$ fixed during differentiation
        \State Compute
        $\widehat{J}^{\mathrm{H}}_{n,\alpha}(\theta_m)$ using
        Algorithm~\ref{algo:hybrid_cvar}
    \Else
        \State Compute
        $\widehat{J}^{\mathrm{H}}_{n,\alpha}(\theta_m)$ using the
        empirical CVaR fallback
    \EndIf

    \Comment{Update policy parameters}
    \State Compute
    \[
        g^{\mathrm{H}}_{n,\alpha}(\theta_m)
        =
        \left.
        \nabla_{\theta}
        \widehat{J}^{\mathrm{H}}_{n,\alpha}(\theta)
        \right|_{\theta=\theta_m}
    \]
    using automatic differentiation
    \State Update $\theta_{m+1}$ using $g^H_{n,\alpha}(\theta_m)$ and the selected gradient-based optimizer. 

\EndFor
\end{algorithmic}

\smallskip\noindent\textbf{Note:}
At each training iteration, threshold selection and CVaR estimation are performed using the complete loss sample rather than separate mini-batches. The threshold-selection procedure is not differentiated. When the EVT branch is active, the selected threshold is held fixed while automatic differentiation preserves the dependence of the loss sample, excesses, empirical moments, MOM parameter estimates, and resulting POT-CVaR estimator on the neural-network parameters. When the EVT approximation cannot be used, the objective falls back to empirical CVaR.
\end{algorithm}


\end{document}